\tikzset{%
            base/.style = {rectangle, rounded corners, draw=black,
                           minimum width=1.8cm, minimum height=1.15cm,
                           text centered, font=\sffamily},
}
\title{Spectral Signatures in Backdoor Attacks}
\author{
  Brandon Tran\\
  EECS\\
  MIT\\
  Cambridge, MA 02139\\
  \texttt{btran@mit.edu}
  \And
  Jerry Li\\
  Simons Institute\\
  Berkeley, CA 94709 \\
  \texttt{jerryzli@berkeley.edu} \\
  \AND
  Aleksander M\k{a}dry\\
  EECS \\
  MIT \\
  \texttt{madry@mit.edu} \\
}
\newcommand{\cR}{\mathcal{R}}
\DeclareMathOperator{\E}{\mathbb{E}}
\DeclareMathOperator{\R}{\mathbb{R}}
\DeclareMathOperator{\D}{\mathbb{D}}
\DeclareMathOperator{\bL}{\mathcal{L}}
\renewcommand{\[ }{\begin{eqnarray*}}
\renewcommand{\]}{\end{eqnarray*}}
\newcommand{\cW}{\mathcal{W}}
\newcommand{\cX}{\mathcal{X}}
\newcommand{\eps}{\varepsilon}
\newtheorem{theorem}{Theorem}[section]
\newtheorem{lemma}[theorem]{Lemma}
\newtheorem{corollary}[theorem]{Corollary}
\theoremstyle{definition}
\newtheorem{definition}{Definition}[section]
\theoremstyle{remark}
\theoremstyle{definition}
\begin{document}

\maketitle

\begin{abstract}

A recent line of work has uncovered a new form of data poisoning: so-called \emph{backdoor} attacks.
These attacks are particularly dangerous because they do not affect a network's behavior on typical, benign data.
Rather, the network only deviates from its expected output when triggered by a perturbation planted by an adversary.

In this paper, we identify a new property of all known backdoor attacks, which we call \emph{spectral signatures}.
This property allows us to utilize tools from robust statistics to thwart the attacks.
We demonstrate the efficacy of these signatures in detecting and removing poisoned examples on real image sets and state of the art neural network architectures.
We believe that understanding spectral signatures is a crucial first step towards designing ML systems secure against such backdoor attacks.

\end{abstract}


\section{Introduction}
\label{sec:intro}

Deep learning has achieved widespread success in a variety of settings, such as computer vision~\cite{KSH12,ResnetPaper}, speech recognition~\cite{GMH13}, and text analysis~\cite{CW08}. 
As models from deep learning are deployed for increasingly sensitive applications, it becomes more and more important to consider the security of these models against attackers.

Perhaps the first setting developed for building secure deep learning models was adversarial examples~\cite{GSS14,PCG16,KGB16,EEF17,SBBR16,CMV16,MMSTV17,TKPBM17}.
Here, test examples are perturbed by seemingly imperceptible amounts in order to change their classification under a neural network classifier.
This demonstrates the ease with which an adversary can fool a trained model.

An orthogonal, yet also important, concern in the context the security of neural nets is their vulnerability to manipulation of their training sets.
Such networks are often fairly data hungry, resulting in training on data that could not be properly vetted.
Consequently, any gathered data might have been manipulated by a malicious adversary and cannot necessarily be trusted. 
One well-studied setting for such training set attacks is \emph{data poisoning}~\cite{BNL12,XBNXER15,MZ15,KL17,SKL17}.
Here, the adversary injects a small number of corrupted training examples, with a goal of degrading the model's generalization accuracy.

More recently, an even more sophisticated threat to a network's integrity has emerged: so-called \emph{backdoor} attacks~\cite{GDG17,CLL17,ABC18}.
Rather than causing the model's test accuracy to degrade, the adversary's goal is for the network to misclassify the test inputs when the data point has been altered by the adversary's choice of perturbation.
This is particularly insidious since the network correctly classifies typical test examples, and so it can be hard to detect if the dataset has been corrupted.

Oftentimes, these attacks are straightforward to implement.
Many simply involve adding a small number of corrupted examples from a chosen attack class, mislabelled with a chosen target class. 
This simple change to the training set is then enough to achieve the desired results of a network that correctly classifies clean test inputs while also misclassifying backdoored test inputs.
Despite their apparent simplicity, though, no effective defenses to these attacks are known.

\textbf{Our Contribution.} In this paper, we demonstrate a new property of backdoor attacks. 
Specifically, we show that these attacks tend to leave behind a detectable trace in the spectrum of the covariance of a feature representation learned by the neural network. 
We call this ``trace'' a \emph{spectral signature}.
We demonstrate that one can use this signature to identify and remove corrupted inputs.
On CIFAR-10, which contains $5000$ images for each of $10$ labels, we show that with as few as $250$ corrupted training examples, the model can be trained to misclassify more than $90\%$ of test examples modified to contain the backdoor.
In our experiments, we are able to use spectral signatures to reliably remove many---in fact, often all---of the corrupted training examples, reducing the misclassification rate on backdoored test points to within $1\%$ of the rate achieved by a standard network trained on a clean training set.
Moreover, we provide some intuition for why one might expect an overparameterized neural network to naturally install a backdoor, and why this also lends itself to the presence of a spectral signature.
Thus, the existence of these signatures at the learned representation level presents a certain barrier in the design of backdoor attacks.
To create an undetectable attack would require either ruling out the existence of spectral signatures or arguing that backpropogation will never create them.
We view this as a first step towards developing comprehensive defenses against backdoor attacks.

\subsection{Spectral signatures from learned representations}
Our notion of spectral signatures draws from a new connection to recent techniques developed for robust statistics~\cite{DKKLMS16,LRV16,CSV17,DKKLMS17}.
When the training set for a given label has been corrupted, the set of training examples for this label consists of two sub-populations.
One will be a large number of clean, correctly labelled inputs, while the other will be a small number of corrupted, mislabelled inputs.
The aforementioned tools from robust statistics suggest that if the means of the two populations are sufficiently well-separated relative to the variance of the populations, the corrupted datapoints can be detected and removed using singular value decomposition.
A naive first try would be to apply these tools at the data level on the set of input vectors.
However, as demonstrated in Figure~\ref{fig:corr_plots}, the high variance in the dataset means that the populations do not separate enough for these methods to work.

On the other hand, as we demonstrate in Figure~\ref{fig:corr_plots}, when the data points are mapped to the learned representations of the network, such a separation \emph{does} occur.
Intuitively, any feature representations for a classifier would be incentivized to boost the signal from a backdoor, since the backdoor alone is a strong indicator for classification.
As the signal gets boosted, the poisoned inputs become more and more distinguished from the clean inputs.
As a result, by running these robust statistics tools on the learned representation, one can detect and remove backdoored inputs.
In Section~\ref{sec:exp}, we validate these claims empirically.
We demonstrate the existence of spectral signatures for backdoor attacks on image classification tasks and show that they can be used to effectively clean the corrupted training set.

Interestingly, we note that the separation requires using these recent techniques from robust statistics to detect it, even at the learned representation level.
In particular, one could consider computing weaker statistics, such as $\ell_2$ norms of the representations or correlations with a random vector, in an attempt to separate the clean and poisoned sub-populations.
However, as shown in Figure~\ref{fig:corr_plots}, these methods appear to be insufficient.
While there is some separation using $\ell_2$ norms, there is still substantial overlap between the norms of the learned representations of the true images and the backdoored images.
The stronger guarantees from robust statistics, detailed in Section~\ref{sec:theory}, are really necessary for detecting the poisoned inputs.

\begin{figure}[!ht]
\begin{center}
{\setlength\tabcolsep{0pt}
\begin{tabular}{ccc}
\includegraphics[width=0.3\textwidth]{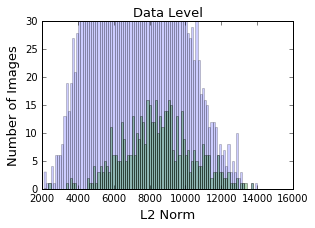} &
\includegraphics[width=0.3\textwidth]{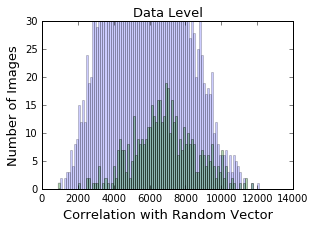} &
\includegraphics[width=0.3\textwidth]{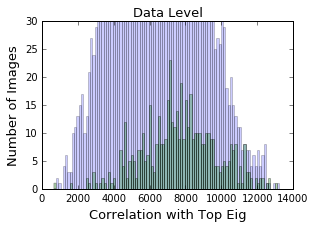} \\
\includegraphics[width=0.3\textwidth]{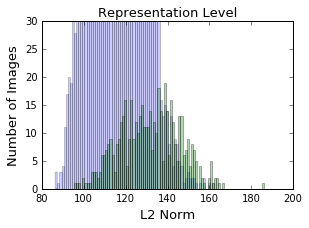} &
\includegraphics[width=0.3\textwidth]{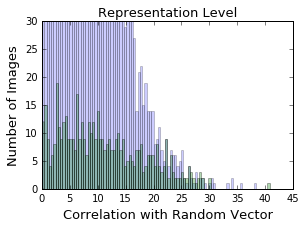} &
\includegraphics[width=0.3\textwidth]{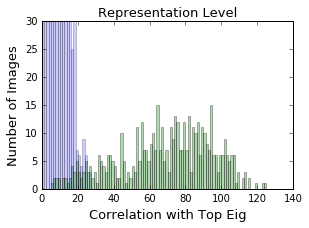} \\
\end{tabular}
}
\end{center}
\caption{Plot of correlations for 5000 training examples correctly labelled and 500 poisoned examples incorrectly labelled. The values for the clean inputs are in blue, and those for the poisoned inputs are in green. We include plots for the computed $\ell_2$ norms, correlation with a random vector, and correlation with the top singular vector of the covariance matrix of examples (respectively, representations).}
\label{fig:corr_plots}
\end{figure}

\subsection{Related Works}

To the best of our knowledge, the first instance of backdoor attacks for deep neural networks appeared in~\cite{GDG17}. 
The ideas for their attacks form the basis for our threat model and are also used in~\cite{CLL17}. 

Another line of work on data poisoning deal with attacks that are meant to degrade the model's generalization accuracy. 
The idea of influence functions~\cite{KL17} provides a possible way to detect such attacks, but do not directly apply to backdoor attacks which do not cause misclassification on typical test examples. 
The work in~\cite{SH18} creates an attack that utilizes data poisoning poisoning in a different way. 
While similar in some ways to the poisoning we consider, their corruption attempts to degrade the model's test performance rather than install a backdoor. 
Outlier removal defenses are studied in~\cite{SKL17}, but while our methods detect and remove outliers of a certain kind, their evaluation only applies in the test accuracy degradation regime.
 
We also point out that backdoor poisoning is related to adversarial examples~\cite{GSS14,PCG16,KGB16,EEF17,SBBR16,CMV16,MMSTV17,TKPBM17}. 
A model robust to $\ell_p$ perturbations of size up to $\varepsilon$ would then be robust to any watermarks that only change the input within this allowed perturbation range. 
However, the backdoors we consider fall outside the range of adversarially trained networks; allowing a single pixel to change to any value would require a very large value of $\varepsilon$.

Another line of work focuses on applying the robust statistics tools developed in~\cite{DKKLMS16,LRV16,CSV17,DKKLMS17} to robust stochastic optimization problems~\cite{BDLS17,CSV17,sever,KKM18,PSBR18}.
Again, the defenses in these papers target attacks that degrade test accuracy. 
Nonetheless, for completeness, we checked and found that these techniques were unable to reliably detect the corrupted data points.

After the submission of this work, independent work by~\cite{liu2018fine} proposes another approach to protection against backdoor attacks that relies on a certain type of neuron pruning, as well as re-training on clean data.


\section{Finding signatures in backdoors}
\label{sec:sig}
In this section, we describe our threat model and present our detection algorithm.

\subsection{Threat Model}

We will consider a threat model related to the work of~\cite{GDG17} in which a backdoor is inserted into the model.
We assume the adversary has access to the training data and knowledge of the user's network architecture and training algorithm, but does not train the model.
Rather, the user trains the classifier, but on the possibly corrupted data received from an outside source. 

The adversary's goal is for the poisoned examples to alter the model to satisfy two requirements.
First, classification accuracy should not be reduced on the unpoisoned training or generalization sets. 
Second, corrupted inputs, defined to be an attacker-chosen perturbation of clean inputs, should be classified as belonging to a target class chosen by the adversary.

Essentially, the adversary injects poisoned data in such a way that the model predicts the true label for true inputs while also predicting the poisoned label for corrupted inputs. 
As a result, the poisoning is in some sense "hidden" due to the fact that the model only acts differently in the presence of the backdoor. 
We provide an example of such an attack in Figure~\ref{fig:clean_poisoned}. 
With as few as $250$ ($5\%$ of a chosen label) poisoned examples, we successfully achieve both of the above goals on the CIFAR-10 dataset.
Our trained models achieve an accuracy of approximately $92-93\%$ on the original test set, which is what a model with a clean dataset achieves.
At the same time, the models classify close to $90\%$ of the backdoored test set as belonging to the poisoned label.
Further details can be found in Section~\ref{sec:exp}. 
Additional examples can be found in~\cite{GDG17}. 

\begin{figure}[!htp]
\begin{center}
{\setlength\tabcolsep{0pt}
\begin{tabular}{cccc}
Natural & Poisoned & Natural & Poisoned\\
\hspace{10pt}
\includegraphics[width=0.15\textwidth]{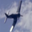} &
\hspace{10pt}
\includegraphics[width=0.15\textwidth]{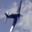} & 
\hspace{10pt}
\includegraphics[width=0.15\textwidth]{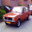} &
\hspace{10pt}
\includegraphics[width=0.15\textwidth]{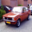} 
\vspace{-2pt} \\
``airplane'' & ``bird'' &
``automobile'' & ``cat''
\end{tabular}
}
\end{center}
\caption{Examples of test images on which the model evaluates incorrectly with the presence of a backdoor. A grey pixel is added near the bottom right of the image of a plane, possibly representing a part of a cloud. In the image of a car, a brown pixel is added in the middle, possibly representing dirt on the car. Note that in both cases, the backdoor (pixel) is not easy to detect with the human eye. The images were generated from the CIFAR10 dataset.}
\label{fig:clean_poisoned}
\end{figure}

\subsection{Detection and Removal of Watermarks}

We will now describe our detection algorithm.
An outline of the algorithm can be found in Figure~\ref{fig:pipeline}. 
We take a black-box neural network with some designated learned representation. 
This can typically be the representation from an autoencoder or a layer in a deep network that is believed to represent high level features. 
Then, we take the representation vectors for all inputs of each label. 
The intuition here is that if the set of inputs with a given label consists of both clean examples as well as corrupted examples from a different label set, the backdoor from the latter set will provide a strong signal in this representation for classification.
As long as the signal is large in magnitude, we can detect it via singular value decomposition and remove the images that provide the signal.
In Section~\ref{sec:theory}, we formalize what we mean by large in magnitude.
w
More detailed pseudocode is provided in Algorithm~\ref{alg:main}.

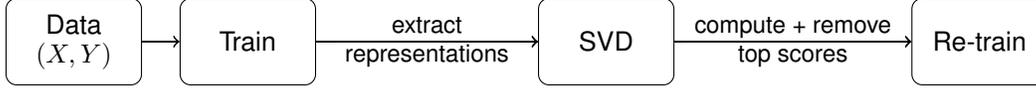
\begin{figure*}[t!]
\centering
\begin{tikzpicture}[node distance=2.4cm,
    every node/.style={fill=white, font=\sffamily}, align=center]
  \node (Data)            [base]                          {Data \\ $(X,Y)$ };
  \node (Train)        [base, right=0.5cm of Data]          {Train};
  \node (SVD)             [base, right=2.95cm of Train]   {SVD};
  \node (Re-train)          [base, right=3.15cm of SVD] {Re-train};
  \draw[line width=0.25mm, ->]             (Data) -- (Train);
  \draw[->]     (Train) edge node {\footnotesize extract \\ \footnotesize representations} (SVD);
  \draw[->]      (SVD) edge node {\footnotesize compute + remove \\ \footnotesize top scores} (Re-train);
  \draw[line width=0.25mm, ->]      (Train) -- (SVD);
  \draw[line width=0.25mm, ->]      (SVD) -- (Re-train);
  \end{tikzpicture}
  \caption{Illustration of the pipeline. We first train a neural network on the data. Then, for each class, we extract a learned representation for each input from that class. We next take the singular value decomposition of the covariance matix of these representations and use this to compute an outlier score for each example. Finally, we remove inputs with the top scores and re-train.} 
  \label{fig:pipeline}
\end{figure*}


\begin{algorithm}[ht]
\caption{}
\label{alg:main}
\begin{algorithmic}[1]
\STATE {\bfseries Input:} Training set $\D_{\textrm{train}}$, randomly initialized neural network model $\bL$ providing a feature representation $\cR$, and upper bound on number of poisoned training set examples $\varepsilon$. For each label $y$ of $\D_{\textrm{train}}$, let $\D_y$ be the training examples corresponding to that label.
\STATE Train $\bL$ on $\D_{\textrm{train}}$. 
\STATE Initialize $S\gets \{\}$.
\FORALL{$y$}
    \STATE Set $n=|\D_y|$, and enumerate the examples of $\D_y$ as $x_1, \ldots, x_n$.
    \STATE Let $\widehat{\cR} = \frac{1}{n} \sum_{i=1}^n \cR(x_i)$.
    \STATE Let $M = [\cR(x_i) - \widehat{\cR}]_{i=1}^n$ be the $n \times d$ matrix of centered representations.
    \STATE Let $v$ be the top right singular vector of $M$.
    \STATE Compute the vector $\tau$ of \emph{outlier scores} defined via
$\tau_i = \left((\cR(x_i) - \widehat{\cR}) \cdot v\right)^2$.
    \STATE Remove the examples with the top $1.5\cdot\varepsilon$ scores from $\D_y$.
    \STATE $S \gets S \cup \D_y$
\ENDFOR
\STATE $\D_{\textrm{train}}\gets S$.
\STATE Re-train $\bL$ on $\D_{\textrm{train}}$ from a random initialization.
\STATE Return $\bL$.
\end{algorithmic}
\end{algorithm}


\section{Spectral signatures for backdoored data in learned representations}
\label{sec:theory}

In this section we give more rigorous intuition as to why learned representations on the corrupted data may cause the attack to have a detectable spectral signature.

\subsection{Outlier removal via SVD}
We first give a simple condition under which spectral techniques are able to reliably detect outliers:
\begin{definition}
Fix $1/2 > \eps > 0$.
Let $D, W$ be two distributions with finite covariance, and let $F  = (1 - \eps) D + \eps W$ be the mixture of $D, W$ with mixing weights $(1 - \eps)$ and $\eps$, respectively.
We say that $D, W$ are \emph{$\eps$-spectrally separable} if there exists a $t > 0$ so that 
\[
\Pr_{X \sim D} \left[ \left| \langle X - \mu_F, v \rangle \right| >  t \right] < \eps \\
\Pr_{X \sim W} \left[ \left| \langle X - \mu_F, v \rangle \right| < t  \right] < \eps \; ,
\]
where $v$ is the top eigenvector of the covariance of $F$.
\end{definition}
Here, we should think of $D$ as the true distribution over inputs, and $W$ as a small, but adversarially added set of inputs.
Then, if $D, W$ are $\eps$-spectrally separable, by removing the largest $\eps$-fraction of points in the direction of the top eigenvector, we are essentially guaranteed to remove all the data points from $W$.
Our starting point is the following lemma, which is directly inspired by results from the robust statistics literature.
While these techniques are more or less implicit in the robust statistics literature, we include them here to provide some intuition as to why spectral techniques should detect deviations in the mean caused by a small sub-population of poisoned inputs.
\begin{lemma}
\label{lem:spectral}
Fix $1/2 > \eps > 0$.
Let $D, W$ be distributions with mean $\mu_D, \mu_W$ and covariances $\Sigma_D, \Sigma_W \preceq \sigma^2 I$, and let $F = (1 - \eps) D + \eps W$.
Then, if $\| \mu_D - \mu_W \|_2^2 \geq \frac{6 \sigma^2}{\eps}$, then $D, W$ are $\eps$-spectrally separable.
\end{lemma}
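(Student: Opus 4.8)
The plan is to lower-bound the quadratic form $v^\top \Sigma_F v$ from below by something proportional to $\|\mu_D - \mu_W\|_2^2$, and then use a Chebyshev-type argument in the direction $v$ to separate the two populations. First I would compute the covariance of the mixture. Writing $\mu_F = (1-\eps)\mu_D + \eps\mu_W$, a standard identity gives
\[
\Sigma_F = (1-\eps)\Sigma_D + \eps\Sigma_W + \eps(1-\eps)(\mu_D - \mu_W)(\mu_D - \mu_W)^\top .
\]
Since $\Sigma_D, \Sigma_W \preceq \sigma^2 I$, the top eigenvalue $\lambda_1$ of $\Sigma_F$ satisfies $\lambda_1 \geq \eps(1-\eps)\|\mu_D - \mu_W\|_2^2$ (evaluate the Rayleigh quotient at the unit vector along $\mu_D - \mu_W$, dropping the PSD noise terms), and also for the top eigenvector $v$ we have $v^\top\Sigma_F v = \lambda_1 \leq \sigma^2 + \eps(1-\eps)(v\cdot(\mu_D-\mu_W))^2$. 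Combining these two inequalities yields a lower bound $(v\cdot(\mu_D - \mu_W))^2 \geq \|\mu_D-\mu_W\|_2^2 - \frac{\sigma^2}{\eps(1-\eps)}$; invoking the hypothesis $\|\mu_D - \mu_W\|_2^2 \geq 6\sigma^2/\eps$ and $1-\eps > 1/2$ this shows $|v\cdot(\mu_D - \mu_W)|$ is a definite constant fraction of $\|\mu_D - \mu_W\|_2$, say at least $2\sigma/\sqrt\eps$ after bookkeeping.

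Next I would choose the threshold $t$ and verify the two tail bounds in the definition. Set $t$ roughly midway, e.g. $t = |\mu_D\cdot v - \mu_F\cdot v| + \sigma/\sqrt\eps$ (note $\mu_F\cdot v - \mu_D\cdot v = \eps(v\cdot(\mu_W - \mu_D))$, so this is about $2\sigma/\sqrt\eps$ above the $D$-mean and, using the separation of the projected means, about $\sigma/\sqrt\eps$ below the $W$-mean). For $X \sim D$: the projection $\langle X, v\rangle$ has variance $v^\top \Sigma_D v \leq \sigma^2$, so by Chebyshev $\Pr[|\langle X - \mu_F, v\rangle| > t] \leq \Pr[|\langle X - \mu_D, v\rangle| > t - |\mu_D\cdot v - \mu_F\cdot v|] = \Pr[|\langle X - \mu_D, v\rangle| > \sigma/\sqrt\eps] \leq \eps$. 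For $X \sim W$: similarly $\operatorname{Var}_W(\langle X, v\rangle) \leq \sigma^2$, and the gap between $t$ and $\mu_W\cdot v$ is at least $\sigma/\sqrt\eps$ by the projected-mean separation, so Chebyshev again gives $\Pr[|\langle X - \mu_F, v\rangle| < t] \leq \Pr[|\langle X - \mu_W, v\rangle| > \sigma/\sqrt\eps] \leq \eps$. These are exactly the two conditions for $\eps$-spectral separability.

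The one genuine subtlety — the place I expect to have to be careful with constants — is the chain of inequalities relating $(v\cdot(\mu_D-\mu_W))^2$ to $\|\mu_D-\mu_W\|_2^2$: it requires simultaneously using $\lambda_1 \geq \eps(1-\eps)\|\mu_D-\mu_W\|_2^2$ (so the signal direction is actually picked out by the top eigenvector, not drowned by $\sigma^2 I$) and $\lambda_1 \leq \sigma^2 + \eps(1-\eps)(v\cdot(\mu_D-\mu_W))^2$. The constant $6$ in the hypothesis is presumably exactly what is needed to make $t$ land strictly more than $\sigma/\sqrt\eps$ away from both projected means after accounting for the $(1-\eps)$ and $\eps(1-\eps)$ factors; so the bookkeeping is where all the slack gets spent, but there is no conceptual obstacle beyond keeping the Rayleigh-quotient comparison and the two Chebyshev applications consistent with one choice of $t$.
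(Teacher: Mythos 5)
Your proposal follows essentially the same route as the paper's proof: the same mixture-covariance identity $\Sigma_F = (1-\eps)\Sigma_D + \eps\Sigma_W + \eps(1-\eps)\Delta\Delta^\top$, the same two-sided Rayleigh-quotient sandwich forcing the top eigenvector $v$ to have $\langle v,\Delta\rangle^2 = \Omega(\sigma^2/\eps)$, and the same threshold $t = \eps|\langle \Delta, v\rangle| + \sigma/\sqrt{\eps}$ with Chebyshev applied to each component. The one caveat---the distance from $t$ to the projected $W$-mean is really $(1-2\eps)|\langle\Delta,v\rangle| - \sigma/\sqrt{\eps}$, which degrades as $\eps \to 1/2$---is a constant-bookkeeping slack present in the paper's own argument as well, and the authors explicitly state that the constants were not optimized.
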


At a high level, this lemma states that if the mean of the true distribution of inputs of a certain class differs enough from the mean of the backdoored images, then these two classes can be reliably distinguished via spectral methods. 

We note here that Lemma~\ref{lem:spectral} is stated for population level statistics; however, it is quite simple to convert these guarantees to standard finite-sample settings, with optimal sample complexity.
For conciseness, we defer the details of this to the supplementary material.

Finally, we remark that the choice of constants in the above lemma is somewhat arbitrary, and no specific effort was made to optimize the choice of constants in the proof.
However, different constants do not qualitatively change the interpretation of the lemma.

The rest of this section is dedicated to a proof sketch of Lemma~\ref{lem:spectral}.
The omitted details can be found the supplementary material.

\begin{proof}[Proof sketch of Lemma~\ref{lem:spectral}]
By Chebyshev's inequality, we know that 
\begin{align}
\Pr_{X \sim D} [| \langle X - \mu_D, u \rangle | > t] &\leq \frac{\sigma^2}{t^2} \; ,~\mbox{and} \label{eq:cheb1} \\
\Pr_{X \sim W} [| \langle X - \mu_W, u \rangle | > t] &\leq \frac{\sigma^2}{t^2} \; , \label{eq:cheb2}
\end{align}
for any unit vector $u$.
Let $\Delta = \mu_D - \mu_W$, and recall $v$ is the top eigenvector of $\Sigma_F$.
The ``ideal'' choice of $u$ in ~\eqref{eq:cheb1} and \eqref{eq:cheb2} that would maximally separate points from $D$ and points from $W$ would be simply a scaled version of $\Delta$.
However, one can show that any unit vector which is sufficiently correlated to $\Delta$ also suffices:
\begin{lemma}
\label{lem:correlation-implies-separation}
Let $\alpha > 0$, and let $u$ be a unit vector so that $|\langle u, \Delta \rangle| > \alpha \cdot \sigma / \sqrt{\eps}$.
Then there exists a $t > 0$ so that 
\[
\Pr_{X \sim D} [| \langle X - \mu_D, u \rangle | > t] < \eps \; , \; \mbox{and} \; \Pr_{X \sim W} [| \langle X - \mu_W, u \rangle | > t] < \frac{1}{(\alpha - 1)^2} \eps \; .
\]
\end{lemma}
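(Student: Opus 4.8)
The plan is to read both tail bounds straight out of Chebyshev's inequality applied to the scalar projection $\langle X,u\rangle$; the only real content is choosing the threshold $t$ so that the constants line up and, crucially, so that $t$ lands strictly below the separation $|\langle u,\Delta\rangle|$ of the two projected means --- the property that makes the correlation hypothesis do any work once this lemma is fed into the proof of Lemma~\ref{lem:spectral}. We may assume $\alpha>1$; when $\alpha\le 1$ the claimed $W$-bound is no stronger than the $D$-bound and the argument below only simplifies.

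First I would record the variance estimates: for any unit vector $u$ we have $\Var_{X\sim D}(\langle X,u\rangle)=u^\top\Sigma_D u\le\sigma^2$ because $\Sigma_D\preceq\sigma^2 I$, and likewise $\Var_{X\sim W}(\langle X,u\rangle)\le\sigma^2$. Chebyshev's inequality then gives, for every $t>0$, both $\Pr_{X\sim D}[\,|\langle X-\mu_D,u\rangle|>t\,]\le\sigma^2/t^2$ and $\Pr_{X\sim W}[\,|\langle X-\mu_W,u\rangle|>t\,]\le\sigma^2/t^2$.

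Next I would pick the threshold. Since $\alpha>1$ we have $\max\{1,\alpha-1\}<\alpha$, so the interval $(\max\{1,\alpha-1\}\cdot\sigma/\sqrt\eps,\ \alpha\cdot\sigma/\sqrt\eps)$ is nonempty; fix any $t$ in it. Then $\sigma^2/t^2<\eps/\max\{1,\alpha-1\}^2$, and this quantity is at most $\eps$ and at most $\eps/(\alpha-1)^2$, so substituting into the two Chebyshev bounds yields precisely the two claimed inequalities. Moreover $t<\alpha\cdot\sigma/\sqrt\eps<|\langle u,\Delta\rangle|$ by hypothesis, so $t$ lies strictly between $\langle\mu_D,u\rangle$ and $\langle\mu_W,u\rangle$; this is the extra fact one carries forward.

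I do not expect a genuine obstacle inside this lemma --- it is elementary once one commits to Chebyshev, and the only care needed is the constant bookkeeping (exhibiting one $t$ meeting the $D$-bound, the $W$-bound, and the constraint $t<|\langle u,\Delta\rangle|$ simultaneously). The real difficulty of the section lies elsewhere: in the proof of Lemma~\ref{lem:spectral} one must show that the top eigenvector $v$ of $\Sigma_F$ is sufficiently correlated with $\Delta$ (so that this lemma applies with $u=v$ and a usefully large $\alpha$), and then combine the resulting concentration of $D$ around $\mu_D$ and of $W$ around $\mu_W$ with the identities $\langle X-\mu_F,v\rangle=\langle X-\mu_D,v\rangle+\eps\langle\Delta,v\rangle$ (for $X\sim D$) and $\langle X-\mu_F,v\rangle=\langle X-\mu_W,v\rangle-(1-\eps)\langle\Delta,v\rangle$ (for $X\sim W$) to conclude $\eps$-spectral separability.
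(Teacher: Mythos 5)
Your proof is correct for the lemma \emph{as literally stated}, and the constant bookkeeping (choosing $t$ in $(\max\{1,\alpha-1\}\sigma/\sqrt{\eps},\,\alpha\sigma/\sqrt{\eps})$) checks out. But you have, quite reasonably, noticed that the literal statement is almost pure Chebyshev --- the hypothesis $|\langle u,\Delta\rangle|>\alpha\sigma/\sqrt{\eps}$ never enters the two displayed tail bounds --- and this is because the statement does not match what the paper's own proof of this lemma establishes. The appendix proof actually bounds the events measured relative to the \emph{mixture} mean $\mu_F$, with the $W$-event reversed: it shows $\Pr_{X\sim D}[|\langle X-\mu_F,v\rangle|>t]\le \eps/\alpha^2$ and $\Pr_{X\sim W}[|\langle X-\mu_F,v\rangle|<t]\le \eps/(\alpha-1)^2$ for the specific threshold $t=\eps|\langle\Delta,v\rangle|+\sigma/\sqrt{\eps}$, using exactly the identities $\langle X-\mu_F,u\rangle=\langle X-\mu_D,u\rangle+\eps\langle\Delta,u\rangle$ and $\langle X-\mu_F,u\rangle=\langle X-\mu_W,u\rangle-(1-\eps)\langle\Delta,u\rangle$ that you defer to the proof of Lemma~\ref{lem:spectral}. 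That $\mu_F$-centered, sign-reversed pair is precisely the definition of $\eps$-spectral separability, so in the paper the combination step ``Lemmas~\ref{lem:correlation-implies-separation} and~\ref{lem:correlations} imply Lemma~\ref{lem:spectral}'' is immediate, whereas in your organization that translation still has to be carried out. One caution if you do carry it out yourself: your ``extra fact'' $t<|\langle u,\Delta\rangle|$ is not quite the right invariant, because the two populations sit at \emph{asymmetric} distances $\eps|\langle\Delta,u\rangle|$ and $(1-\eps)|\langle\Delta,u\rangle|$ from $\mu_F$ along $u$; the separating threshold must be placed near the $D$-side, as in the paper's choice $t=\eps|\langle\Delta,v\rangle|+\sigma/\sqrt{\eps}$, rather than anywhere below the gap. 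So: no mathematical gap in your overall plan, same Chebyshev engine, but the substantive half of the paper's proof of this lemma (the $\mu_F$-recentering and the anti-concentration of $W$ away from $\mu_F$) lives outside what you have written here.
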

The proof of this is deferred to the supplementary material.

What remains to be shown is that $v$ satisfies this condition.
Intuitively, this works because $\Delta$ is sufficiently large that its signal is noticeable in the spectrum of $\Sigma_F$.
As a result, $v$ (being the top eigenvector of $\Sigma_F$) must have non-negligible correlation with $\Delta$.
Concretely, this allows us to show the following lemma, whose proof we defer to the supplementary material.
\begin{lemma}
\label{lem:correlations}
Under the assumptions of Lemma~\ref{lem:spectral}, we have
$
\langle v, \Delta \rangle^2 \geq \frac{2 \sigma^2}{\eps}.
$
\end{lemma}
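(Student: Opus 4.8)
The plan is to turn the informal statement ``$\Delta$ is large enough to be visible in the spectrum of $\Sigma_F$'' into an exact computation of $\Sigma_F$, followed by a two-sided estimate of its top eigenvalue $\lambda_{\max}(\Sigma_F) = \iprod{v, \Sigma_F v}$. First I would record the elementary identity for the covariance of a two-component mixture. Since $\mu_F = (1-\eps)\mu_D + \eps\mu_W$, we have $\mu_D - \mu_F = \eps\Delta$ and $\mu_W - \mu_F = -(1-\eps)\Delta$, so writing $X - \mu_F = (X-\mu_D) + \eps\Delta$ under $D$ and $X - \mu_F = (X-\mu_W) - (1-\eps)\Delta$ under $W$, and using $\E_{X\sim D}[X-\mu_D] = \E_{X\sim W}[X-\mu_W] = 0$ to kill the first-moment cross terms, one gets
\begin{equation*}
\Sigma_F \;=\; (1-\eps)\Sigma_D + \eps\Sigma_W + \bigl[(1-\eps)\eps^2 + \eps(1-\eps)^2\bigr]\Delta\Delta^\top \;=\; \Sigma_\star + \eps(1-\eps)\,\Delta\Delta^\top,
\end{equation*}
where $\Sigma_\star := (1-\eps)\Sigma_D + \eps\Sigma_W$ satisfies $0 \preceq \Sigma_\star \preceq \sigma^2 I$ by the hypothesis $\Sigma_D, \Sigma_W \preceq \sigma^2 I$.

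Next I would sandwich $\lambda_{\max}(\Sigma_F)$. For the lower bound, I plug the unit vector $\Delta/\normt{\Delta}$ into the Rayleigh quotient; since $\Sigma_\star \succeq 0$ this gives $\lambda_{\max}(\Sigma_F) \geq \eps(1-\eps)\snormt{\Delta}$. For the upper bound, I use that $v$ is a unit vector and $\Sigma_\star \preceq \sigma^2 I$, so $\lambda_{\max}(\Sigma_F) = \iprod{v, \Sigma_\star v} + \eps(1-\eps)\iprod{v,\Delta}^2 \leq \sigma^2 + \eps(1-\eps)\iprod{v,\Delta}^2$. Combining the two inequalities and dividing by $\eps(1-\eps)$ yields $\iprod{v,\Delta}^2 \geq \snormt{\Delta} - \sigma^2/(\eps(1-\eps))$.

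Finally I would insert the hypothesis $\snormt{\Delta} \geq 6\sigma^2/\eps$ together with the standing assumption $\eps < 1/2$, which gives $1-\eps > 1/2$ and hence $\sigma^2/(\eps(1-\eps)) < 2\sigma^2/\eps$; therefore $\iprod{v,\Delta}^2 > 6\sigma^2/\eps - 2\sigma^2/\eps = 4\sigma^2/\eps \geq 2\sigma^2/\eps$, which is in fact slightly stronger than the claimed bound (consistent with the earlier remark that the constants are not optimized). I do not expect a genuine obstacle here: the only place demanding a little care is the mixture-covariance identity — checking that the cross terms vanish and that the two rank-one contributions combine to the clean coefficient $\eps(1-\eps)$ — and being careful to use the right direction of each PSD inequality when passing between $\Sigma_\star \preceq \sigma^2 I$ (upper bound via $v$) and $\Sigma_\star \succeq 0$ (lower bound via $\Delta/\normt{\Delta}$). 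An alternative that avoids writing down $\Sigma_F$ in full would be to bound the variance of $F$ along the directions $\Delta/\normt{\Delta}$ and $v$ directly, but the explicit formula for $\Sigma_F$ is the cleanest route and is reused elsewhere in the argument.
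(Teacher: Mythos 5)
Your proposal is correct and follows essentially the same route as the paper's proof: the same mixture-covariance identity $\Sigma_F = (1-\eps)\Sigma_D + \eps\Sigma_W + \eps(1-\eps)\Delta\Delta^\top$, the same lower bound $\lambda_{\max}(\Sigma_F) \geq \eps(1-\eps)\|\Delta\|_2^2$ via the direction $\Delta/\|\Delta\|_2$, and the same upper bound on $v^\top \Sigma_F v$ using $\Sigma_D, \Sigma_W \preceq \sigma^2 I$. Your final arithmetic even recovers the same slightly stronger bound of $4\sigma^2/\eps$ that the paper's intermediate step $\tfrac{2}{3}\|\Delta\|_2^2$ implies.
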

Finally, combining Lemmas~\ref{lem:correlation-implies-separation} and~\ref{lem:correlations} imply Lemma~\ref{lem:spectral}.
\end{proof}

\section{Experiments}
\label{sec:exp}
\subsection{Setup}
We study backdoor poisoning attacks on the CIFAR10~\cite{cifar} dataset, using a standard ResNet~\cite{ResnetPaper} model with $3$ groups of residual layers with filter sizes $[16, 16, 32, 64]$ and 5 residual units per layer. 
Unlike more complicated feature extractors such as autoencoders, the standard ResNet does not have a layer tuned to be a learned representation for any desired task. 
However, one can think of any of the layers as modeling different kinds of representations. 
For example, the first convolutional layer is typically believed to represent edges in the image while the latter layers learn "high level" features~\cite{Decaf}. 
In particular, it is common to treat the last few layers as representations for classification. 

Our experiments showed that our outlier removal method successfully removes the backdoor when applied on many of the later layers. 
We choose to report the results for the second to last residual unit simply because, on average, the method applied to this layer removed the most poisoned images.
We also remark that we tried our method directly on the input. 
Even when data augmentation is removed, so that the backdoor is not flipped or translated, the signal is still not strong enough to be detected, suggesting that a learned representation amplifying the signal is really necessary.

We note that we also performed the outlier removal on a VGG~\cite{VGGPaper} model.
Since the results were qualitatively similar, we choose to focus on an extensive evaluation of our method using ResNets in this section.
The results for VGG are provided in Table~\ref{tab:vgg} of the supplementary materials.

\subsection{Attacks}
Our standard attack setup consists of a pair of (attack, target) labels, a backdoor shape (pixel, X, or L), an epsilon (number of poisoned images), a position in the image, and a color for the mark.

For our experiments, we choose 4 pairs of labels by hand- (airplane, bird), (automobile, cat), (cat, dog), (horse, deer)- and 4 pairs randomly- (automobile, dog), (ship, frog), (truck, bird), (cat,horse). 
Then, for each pair of labels, we generate a random shape, position, and color for the backdoor.
We also use the hand-chosen backdoors of Figure~\ref{fig:clean_poisoned}.

\subsection{Attack Statistics}
In this section, we show some statistics from the attacks that give motivation for why our method works. 
First, in the bottom right plot of Figure~\ref{fig:corr_plots}, we can see a clear separation between the scores of the poisoned images and those of the clean images. 
This is reflected in the statistics displayed in Table~\ref{tab:stats}.
Here, we record the norms of the mean of the representation vectors for both the clean inputs as well as the clean plus corrupted inputs.
Then, we record the norm of the difference in mean to measure the shift created by adding the poisoned examples.
Similarly, we have the top three singular values for the mean-shifted matrix of representation vectors of both the clean examples and the clean plus corrupted examples.
We can see from the table that there is quite a significant increase in the singular values upon addition of the poisoned examples.
The statistics gathered suggest that our outlier detection algorithm should succeed in removing the poisoned inputs.
\begin{table*}[htp!]
\caption{
We record statistics for the two experiments coming from Figure~\ref{fig:clean_poisoned}, backdoored planes labelled as birds and backdoored cars labelled as cats.
For both the clean dataset and the clean plus poisoned dataset, we record the norm of the mean of the representation vectors and the top three singular values of the covariance matrix formed by these vectors.
We also record the norm of the difference in the means of the vectors from the two datasets.
}
\label{tab:stats}
\begin{center}
{\renewcommand{\arraystretch}{1.1}
\begin{tabular}{c|ccccc}
    Experiment & Norm of Mean & Shift in Mean & 1st SV & 2nd SV & 3rd SV\\
\hline                                                                          Birds only
    & 78.751 & N/A & 1194.223 & 1115.931 & 967.933 \\
Birds + planes
    & 78.855 & 6.194 & 1613.486 & 1206.853 & 1129.711 \\
Cats + cars
    & 89.409 & N/A & 1016.919 & 891.619 & 877.743 \\
Cats + poison
    & 89.690 & 7.343 & 1883.934 & 1030.638 & 913.895 \\
\end{tabular}}
\end{center}
\end{table*}

\subsection{Evaluating our Method}
In Tables~\ref{tab:core_small}, we record the results for a selection of our training iterations.
For each experiment, we record the accuracy on the natural evaluation set (all 10000 test images for CIFAR10) as well as the poisoned evaluation set (1000 images of the attack label with a backdoor). 
We then record the number of poisoned images left after one removal step and the accuracies upon retraining. 
The table shows that for a variety of parameter choices, the method successfully removes the attack. 
Specifically, the clean and poisoned test accuracies for the second training iteration after the removal step are comparable to those achieved by a standard trained network on a clean dataset. 
For reference, a standard trained network on a clean training set classifies a clean test set with accuracy $92.67\%$ and classifies each poisoned test set with accuracy given in the rightmost column of Table~\ref{tab:core_small}.
We refer the reader to Figure~\ref{tab:core} in the supplementary materials for results from more choices of attack parameters.

We also reran the experiments multiple times with different random choices for the attacks. 
For each run that successfully captured the backdoor in the first iteration, which we define as recording approximately 90\% or higher accuracy on the poisoned set, the results were similar to those recorded in the table. 
As an aside, we note that 5\% poisoned images is not enough to capture the backdoor according to our definition in our examples from Figure~\ref{fig:clean_poisoned}, but 10\% is sufficient.

\begin{table*}[htp]
\caption{
Main results for a selection of different attack parameters. Natural and poisoned accuracy are reported for two iterations, before and after the removal step. We compare to the accuracy on each poisoned test set obtained from a network trained on a clean dataset (Std Pois). The attack parameters are given by a backdoor attack image, target label, and percentage of added images.
}
\label{tab:core_small}
\begin{center}
{\renewcommand{\arraystretch}{1.3}
\begin{tabular}{c|c|c|cc|c|cc|c}
    Sample & Target & Epsilon & Nat 1 & Pois 1 & \# Pois Left & Nat 2 & Pois 2 & Std Pois\\
\hline                                                                            
\multirow{2}{*}{\includegraphics[width=0.07\textwidth]{figures/adv_plane.png}}
    & \multirow{2}{*}{bird} & 5\% & 92.27\% & 74.20\% & 57 & 92.64\% & 2.00\% & \multirow{2}{*}{1.20\%} \\
    & &10\% & 92.32\% & 89.80\% & 7 & 92.68\% & 1.50\% &\\
\hline                                                                            
\multirow{2}{*}{\includegraphics[width=0.07\textwidth]{figures/adv_car.png}}
    & \multirow{2}{*}{cat} & 5\% & 92.45\% & 83.30\% & 24 & 92.24\% & 0.20\% & \multirow{2}{*}{0.10\%} \\
    & &10\% & 92.39\% & 92.00\% & 0 & 92.44\% & 0.00\% &\\
\hline                                                                            
\multirow{2}{*}{\includegraphics[width=0.07\textwidth]{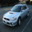}}
    & \multirow{2}{*}{dog} & 5\% & 92.17\% & 89.80\% & 7 & 93.01\% & 0.00\% & \multirow{2}{*}{0.00\%} \\
    & &10\% & 92.55\% & 94.30\% & 1 & 92.64\% & 0.00\% &\\
\hline                                                                            
\multirow{2}{*}{\includegraphics[width=0.07\textwidth]{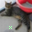}}
    & \multirow{2}{*}{horse} & 5\% & 92.60\% & 99.80\% & 0 & 92.57\% & 1.00\% & \multirow{2}{*}{0.80\%} \\
    & &10\% & 92.26\% & 99.80\% & 0 & 92.63\% & 1.20\% &\\
\hline                                                                            
\multirow{2}{*}{\includegraphics[width=0.07\textwidth]{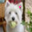}}
    & \multirow{2}{*}{cat} & 5\% & 92.86\% & 98.60\% & 0 & 92.79\% & 8.30\% & \multirow{2}{*}{8.00\%} \\
    & &10\% & 92.29\% & 99.10\% & 0 & 92.57\% & 8.20\% &\\
\hline                                                                            
\multirow{2}{*}{\includegraphics[width=0.07\textwidth]{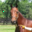}}
    & \multirow{2}{*}{deer} & 5\% & 92.68\% & 99.30\% & 0 & 92.68\% & 1.10\% & \multirow{2}{*}{1.00\%} \\
    & &10\% & 92.68\% & 99.90\% & 0 & 92.74\% & 1.60\% &\\
\hline                                                                            
\multirow{2}{*}{\includegraphics[width=0.07\textwidth]{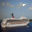}}
    & \multirow{2}{*}{frog} & 5\% & 92.87\% & 88.80\% & 10 & 92.61\% & 0.10\% & \multirow{2}{*}{0.30\%} \\
    & &10\% & 92.82\% & 93.70\% & 3 & 92.74\% & 0.10\% &\\
\hline                                                                            
\multirow{2}{*}{\includegraphics[width=0.07\textwidth]{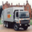}}
    & \multirow{2}{*}{bird} & 5\% & 92.52\% & 97.90\% & 0 & 92.69\% & 0.00\% & \multirow{2}{*}{0.00\%} \\
    & &10\% & 92.68\% & 99.30\% & 0 & 92.45\% & 0.50\% &\\
\end{tabular}}
\end{center}
\end{table*}

\subsection{Sub-populations}
Our outlier detection method crucially relies on the difference in representation between the clean and poisoned examples being much larger than the difference in representations within the clean examples.
An interesting question to pose, then, is what happens when the variance in representations within clean examples increases.
A natural way this may happen is by combining labels; for instance, by combining ``cats'' and ``dogs'' into a shared class called ``pets''.
When this happens, the variance in the representations for images in this shared class increases.
How robust are our methods to this sort of perturbation?
Do spectral signatures arise even when the variance in representations has been artificially increased?

In this section, we provide our experiments exploring our outlier detection method when one class class consists of a heterogenous mix of different populations.
As mentioned above, we combined ``cats'' and ``dogs'' into a class we call ``pets''.
Then, we install a backdoor of poisoned automobiles labeled as pets, as well as poisoned pets labeled as automobiles.
With these parameters, we train our Resnet and perform outlier detection.
The results are provided in Table~\ref{tab:subpops}. 
We can see from these results that in both cases, the automobile examples still have a representation sufficiently separated from the combined cats and dogs representations.

\begin{table*}[htp!]
\caption{
Results for a selection of different attack parameters on a combined label of cats and dogs, that we call pets. Natural and poisoned accuracy are reported for two iterations, before and after the removal step. The attack parameters are given by a backdoor attack image, target label, and percentage of added images.
}
\label{tab:subpops}
\begin{center}
{\renewcommand{\arraystretch}{1.3}
\begin{tabular}{c|c|c|cc|c|cc|c}
    Sample & Target & Epsilon & Nat 1 & Pois 1 & \# Pois Left & Nat 2 & Pois 2 \\
\hline                                                                            
\multirow{2}{*}{\includegraphics[width=0.07\textwidth]{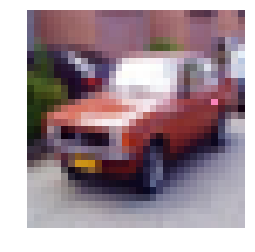}}
    & \multirow{2}{*}{pets} & 5\% & 93.99\% & 95.80\% & 0 & 94.18\% & 0.30\% \\
    & &10\% & 94.05\% & 96.70\% & 0 & 94.27\% & 0.00\% \\
\hline                                                                            
\multirow{2}{*}{\includegraphics[width=0.07\textwidth]{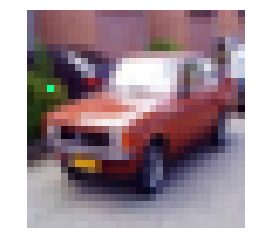}}
    & \multirow{2}{*}{pets} & 5\% & 94.28\% & 95.00\% & 0 & 94.12\% & 0.20\% \\
    & &10\% & 94.13\% & 99.70\% & 0 & 93.89\% & 0.00\% \\
\hline                                                                            
\multirow{2}{*}{\includegraphics[width=0.07\textwidth]{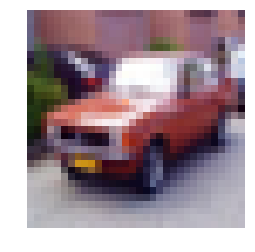}}
    & \multirow{2}{*}{pets} & 5\% & 94.12\% & 89.80\% & 0 & 94.18\% & 0.10\%  \\
    & &10\% & 93.90\% & 93.40\% & 0 & 94.11\% & 0.10\% \\
\hline                                                                            
\multirow{2}{*}{\includegraphics[width=0.07\textwidth]{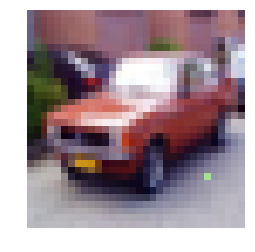}}
    & \multirow{2}{*}{pets} & 5\% & 93.97\% & 94.80\% & 0 & 94.42\% & 0.00\%  \\
    & &10\% & 94.23\% & 97.20\% & 0 & 93.96\% & 0.30\% \\
\hline                                                                            
\multirow{2}{*}{\includegraphics[width=0.07\textwidth]{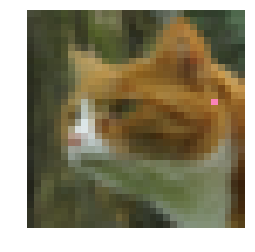}}
    & \multirow{2}{*}{automobile} & 5\% & 93.96\% & 98.65\% & 0 & 94.46\% & 0.20\% \\
    & &10\% & 94.18\% & 99.20\% & 0 & 94.00\% & 0.20\% \\
\hline                                                                            
\multirow{2}{*}{\includegraphics[width=0.07\textwidth]{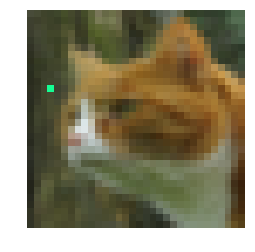}}
    & \multirow{2}{*}{automobile} & 5\% & 94.20\% & 99.15\% & 0 & 94.36\% & 0.25\% \\
    & &10\% & 94.03\% & 99.55\% & 0 & 94.03\% & 0.10\% \\
\hline                                                                            
\multirow{2}{*}{\includegraphics[width=0.07\textwidth]{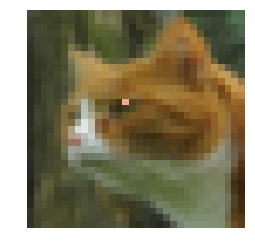}}
    & \multirow{2}{*}{automobile} & 5\% & 93.89\% & 94.40\% & 6 & 94.20\% & 0.20\%  \\
    & &10\% & 94.49\% & 97.20\% & 2 & 94.49\% & 0.05\% \\
\hline                                                                            
\multirow{2}{*}{\includegraphics[width=0.07\textwidth]{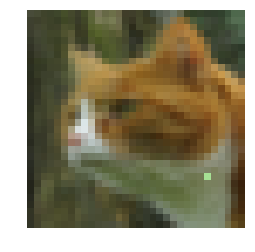}}
    & \multirow{2}{*}{automobile} & 5\% & 94.26\% & 95.60\% & 5 & 94.06\% & 0.00\%  \\
    & &10\% & 94.20\% & 98.45\% & 1 & 94.06\% & 0.15\% \\
\end{tabular}}
\end{center}
\end{table*}


\section{Conclusion}
In this paper, we present the notion of spectral signatures and demonstrate how they can be used to detect backdoor poisoning attacks.
Our method relies on the idea that learned representations for classifiers amplify signals crucial to classification.
Since the backdoor installed by these attacks change an example's label, the representations will then contain a strong signal for the backdoor.
Based off this assumption, we then apply tools from robust statistics to the representations in order to detect and remove the poisoned data.

We implement our method for the CIFAR10 image recognition task and demonstrate that we can detect outliers on real image sets. We provide statistics showing that at the learned representation level, the poisoned inputs shift the distribution enough to be detected with SVD methods. Furthermore, we also demonstrate that the learned representation is indeed necesary; naively utilizing robust statistics tools at the data level does not provide a means with which to remove backdoored examples.

One interesting direction from our work is to further explore the relations to adversarial examples.
As mentioned previously in the paper, models robust to a group of perturbations are then robust to backdoors lying in that group of perturbations.
In particular, if one could train a classifier robust to $\ell_0$ perturbations, then backdoors consisting of only a few pixels would not be captured.

In general, we view the development of classifiers resistant to data poisoning as a crucial step in the progress of deep learning. 
As neural networks are deployed in more situations, it is important to study how robust they are, especially to simple and easy to implement attacks.
This paper demonstrates that machinery from robust statistics and classical machine learning can be very useful tools for understanding this behavior.
We are optimistic that similar connections may have widespread application for defending against other types of adversarial attacks in deep learning.

\paragraph{Acknowledgements}
B.T. was supported by an NSF Graduate Research Fellowship.
J.L. was supported by NSF Award CCF-1453261 (CAREER), CCF-1565235, and a Google Faculty Research Award. This work was done in part while the author the VMWare Fellow at the Simons Institute for the Theory of Computing, and also while the author was at MIT and an intern at Google Brain.
A.M. was supported in part by an Alfred P.~Sloan Research Fellowship, a Google Research Award, and the NSF grants CCF-1553428 and CNS-1815221.

\clearpage
\small
\bibliographystyle{abbrv}
\bibliography{refs}

\begin{thebibliography}{10}

\bibitem{ABC18}
Y.~Adi, C.~Baum, M.~Cisse, B.~Pinkas, and J.~Keshet.
\newblock Turning your weakness into a strength: Watermarking deep neural
  networks by backdooring.
\newblock {\em arXiv preprint arXiv:1802.04633}, 2018.

\bibitem{BDLS17}
S.~Balakrishnan, S.~S. Du, J.~Li, and A.~Singh.
\newblock Computationally efficient robust sparse estimation in high
  dimensions.
\newblock In {\em Conference on Learning Theory}, pages 169--212, 2017.

\bibitem{BNL12}
B.~Biggio, B.~Nelson, and P.~Laskov.
\newblock Poisoning attacks against support vector machines.
\newblock In {\em ICML}, 2012.

\bibitem{CMV16}
N.~Carlini, P.~Mishra, T.~Vaidya, Y.~Zhang, M.~Sherr, C.~Shields, D.~Wagner,
  and W.~Zhou.
\newblock Hidden voice commands.
\newblock In {\em {USENIX} Security)}, pages 513--530, 2016.

\bibitem{CSV17}
M.~Charikar, J.~Steinhardt, and G.~Valiant.
\newblock Learning from untrusted data.
\newblock In {\em Proceedings of the 49th Annual ACM SIGACT Symposium on Theory
  of Computing}, pages 47--60. ACM, 2017.

\bibitem{CLL17}
X.~Chen, C.~Liu, B.~Li, K.~Lu, and D.~Song.
\newblock Targeted backdoor attacks on deep learning systems using data
  poisoning.
\newblock {\em arXiv preprint arXiv:1712.05526}, 2017.

\bibitem{CW08}
R.~Collobert and J.~Weston.
\newblock A unified architecture for natural language processing: Deep neural
  networks with multitask learning.
\newblock In {\em Proceedings of the 25th international conference on Machine
  learning}, pages 160--167. ACM, 2008.

\bibitem{DKKLMS16}
I.~Diakonikolas, G.~Kamath, D.~M. Kane, J.~Li, A.~Moitra, and A.~Stewart.
\newblock Robust estimators in high dimensions without the computational
  intractability.
\newblock In {\em Foundations of Computer Science (FOCS), 2016 IEEE 57th Annual
  Symposium on}, pages 655--664. IEEE, 2016.

\bibitem{DKKLMS17}
I.~Diakonikolas, G.~Kamath, D.~M. Kane, J.~Li, A.~Moitra, and A.~Stewart.
\newblock Being robust (in high dimensions) can be practical.
\newblock In {\em International Conference on Machine Learning}, pages
  999--1008, 2017.

\bibitem{sever}
I.~Diakonikolas, G.~Kamath, D.~M. Kane, J.~Li, J.~Steinhardt, and A.~Stewart.
\newblock Sever: A robust meta-algorithm for stochastic optimization.
\newblock {\em arXiv preprint arXiv:1803.02815}, 2018.

\bibitem{Decaf}
J.~Donahue et~al.
\newblock Decaf: A deep convolutional activation feature for generic visual
  recognition.
\newblock In {\em ICML}, 2014.

\bibitem{EEF17}
I.~Evtimov, K.~Eykholt, E.~Fernandes, T.~Kohno, B.~Li, A.~Prakash, A.~Rahmati,
  and D.~Song.
\newblock Robust physical-world attacks on machine learning models.
\newblock {\em arXiv preprint arXiv:1707.08945}, 2017.

\bibitem{GSS14}
I.~J. Goodfellow, J.~Shlens, and C.~Szegedy.
\newblock Explaining and harnessing adversarial examples.
\newblock In {\em ICLR}, 2014.

\bibitem{GMH13}
A.~Graves, A.-r. Mohamed, and G.~Hinton.
\newblock Speech recognition with deep recurrent neural networks.
\newblock In {\em Acoustics, speech and signal processing (icassp), 2013 ieee
  international conference on}, pages 6645--6649. IEEE, 2013.

\bibitem{GDG17}
T.~Gu, B.~Dolan-Gavitt, and S.~Garg.
\newblock Badnets: Identifying vulnerabilities in the machine learning model
  supply chain.
\newblock {\em arXiv preprint arXiv:1708.06733}, 2017.

\bibitem{ResnetPaper}
K.~He, X.~Zhang, S.~Ren, and J.~Sun.
\newblock Deep residual learning for image recognition.
\newblock In {\em Proceedings of the IEEE Conference on Computer Vision and
  Pattern Recognition}, pages 770--778, 2016.

\bibitem{KKM18}
A.~Klivans, P.~K. Kothari, and R.~Meka.
\newblock Efficient algorithms for outlier-robust regression.
\newblock {\em arXiv preprint arXiv:1803.03241}, 2018.

\bibitem{KL17}
P.~W. Koh and P.~Liang.
\newblock Understanding black-box predictions via influence functions.
\newblock In {\em ICML}, 2017.

\bibitem{cifar}
A.~Krizhevsky and G.~Hinton.
\newblock Learning multiple layers of features from tiny images.
\newblock 2009.

\bibitem{KSH12}
A.~Krizhevsky, I.~Sutskever, and G.~E. Hinton.
\newblock Imagenet classification with deep convolutional neural networks.
\newblock In {\em Advances in neural information processing systems}, pages
  1097--1105, 2012.

\bibitem{KGB16}
A.~Kurakin, I.~Goodfellow, and S.~Bengio.
\newblock Adversarial examples in the physical world.
\newblock {\em arXiv preprint arXiv:1607.02533}, 2016.

\bibitem{LRV16}
K.~A. Lai, A.~B. Rao, and S.~Vempala.
\newblock Agnostic estimation of mean and covariance.
\newblock In {\em Foundations of Computer Science (FOCS), 2016 IEEE 57th Annual
  Symposium on}, pages 665--674. IEEE, 2016.

\bibitem{liu2018fine}
K.~Liu, B.~Dolan-Gavitt, and S.~Garg.
\newblock Fine-pruning: Defending against backdooring attacks on deep neural
  networks.
\newblock {\em arXiv preprint arXiv:1805.12185}, 2018.

\bibitem{MMSTV17}
A.~Madry, A.~Makelov, L.~Schmidt, D.~Tsipras, and A.~Vladu.
\newblock Towards deep learning models resistant to adversarial attacks.
\newblock {\em arXiv preprint arXiv:1706.06083}, 2017.

\bibitem{MZ15}
S.~Mei and X.~Zhu.
\newblock The security of latent dirichlet allocation.
\newblock In {\em Artificial Intelligence and Statistics}, 2015.

\bibitem{PCG16}
N.~Papernot, N.~Carlini, I.~Goodfellow, R.~Feinman, F.~Faghri, A.~Matyasko,
  K.~Hambardzumyan, Y.-L. Juang, A.~Kurakin, R.~Sheatsley, et~al.
\newblock cleverhans v2. 0.0: an adversarial machine learning library.
\newblock {\em arXiv preprint arXiv:1610.00768}, 2016.

\bibitem{PSBR18}
A.~Prasad, A.~S. Suggala, S.~Balakrishnan, and P.~Ravikumar.
\newblock Robust estimation via robust gradient estimation.
\newblock {\em arXiv preprint arXiv:1802.06485}, 2018.

\bibitem{SH18}
A.~Shafahi, W.~R. Huang, M.~Najibi, O.~Suciu, C.~Studer, T.~Dumitras, and
  T.~Goldstein.
\newblock Poison frogs! targeted clean-label poisoning attacks on neural
  networks.
\newblock {\em arXiv preprint arXiv:1804.00792}, 2018.

\bibitem{SBBR16}
M.~Sharif, S.~Bhagavatula, L.~Bauer, and M.~K. Reiter.
\newblock Accessorize to a crime: Real and stealthy attacks on state-of-the-art
  face recognition.
\newblock In {\em ACM SIGSAC Conference on Computer and Communications
  Security}. ACM, 2016.

\bibitem{VGGPaper}
K.~Simonyan and A.~Zisserman.
\newblock Very deep convolutional networks for large-scale image recognition.
\newblock {\em CoRR}, abs/1409.1556, 2014.

\bibitem{SKL17}
J.~Steinhardt, P.~W.~W. Koh, and P.~S. Liang.
\newblock Certified defenses for data poisoning attacks.
\newblock In {\em NIPS}, 2017.

\bibitem{TKPBM17}
F.~Tram{\`e}r, A.~Kurakin, N.~Papernot, D.~Boneh, and P.~McDaniel.
\newblock Ensemble adversarial training: Attacks and defenses.
\newblock {\em arXiv preprint arXiv:1705.07204}, 2017.

\bibitem{Ver18}
R.~Vershynin.
\newblock {\em High-Dimensional Probability: An Introduction with Applications
  in Data Science}.
\newblock 2018.
\newblock Available at
  \url{https://www.math.uci.edu/~rvershyn/papers/HDP-book/HDP-book.pdf}.

\bibitem{XBNXER15}
H.~Xiao, B.~Biggio, B.~Nelson, H.~Xiao, C.~Eckert, and F.~Roli.
\newblock Support vector machines under adversarial label contamination.
\newblock {\em Neurocomputing}, 160:53--62, 2015.

\end{thebibliography}
\clearpage
\appendix


\section{Omitted proofs}

\subsection{Proof of Lemma~\ref{lem:correlation-implies-separation}}
\begin{proof}[Proof of Lemma~\ref{lem:correlation-implies-separation}]

We have
\begin{align*}
\langle X - \mu_F, u \rangle = \langle X - \mu_D, u \rangle + \eps \langle \Delta, u \rangle \; ,~\mbox{and} \\
\langle X - \mu_F, u \rangle = \langle X - \mu_W, u \rangle - (1 - \eps) \langle \Delta, u \rangle \; .
\end{align*}
Thus if we let $t = \eps \left| \langle \Delta, v \rangle \right| + \frac{\sigma}{\sqrt{\eps}}$, we have
\begin{align*}
\Pr_{X \sim D} \left[ \left| \langle X - \mu_F, v \rangle \right| > t \right] &\leq \Pr_{X \sim D} \left[ \left| \langle X - \mu_D, v \rangle \right| > \frac{\sigma}{\sqrt{\eps}} \right] \leq \frac{\eps}{\alpha^2} \; ,
\end{align*}
by~\eqref{eq:cheb1} and
\begin{align*}
\Pr_{X \sim W} \left[ \left| \langle X - \mu_F, v \rangle \right| < t \right] &\leq \Pr_{X \sim W} \left[ \left| \langle X - \mu_W, v \rangle \right| > \langle \Delta, v \rangle - \frac{\alpha \sigma}{\sqrt{\eps}} \right] \\
&\stackrel{(a)}{\leq} \Pr_{X \sim W} \left[ \left| \langle X - \mu_W, v \rangle \right| > \frac{(\alpha - 1) \sigma}{\sqrt{\eps}} \right] \\
&\stackrel{(b)}{\leq} \frac{\eps}{(\alpha - 1)^2} \; ,
\end{align*}
where (a) follows from assumption, and (b) follows from~\eqref{eq:cheb2}.
\end{proof}

\subsection{Proof of Lemma~\ref{lem:correlations}}
\begin{proof}[Proof of Lemma~\ref{lem:correlations}]
By explicit computation, we have
\begin{align*}
\E_{X \sim D} \left[ (X - \mu_F) (X - \mu_F)^\top \right] &= \Sigma_D + \eps^2 \Delta \Delta^\top \; , \\
\E_{X \sim W} \left[ (X - \mu_W) (X - \mu_W)^\top \right] &= \Sigma_W + (1 - \eps)^2 \Delta \Delta^\top \; .
\end{align*}
Therefore overall if $\Sigma_F$ is the covariance of $F$, we have
\[
\Sigma_F = (1 - \eps) \Sigma_D + \eps \Sigma_W + \eps (1 - \eps) \Delta \Delta^\top \; ,
\]
so in particular we have $\Sigma_F \succeq \eps (1 - \eps) \Delta \Delta^\top$.
Thus $\| \Sigma_F \|_2 \geq \eps (1 - \eps) \| \Delta \|_2^2$.

Then we have
\begin{align*}
\eps (1 - \eps) \| \Delta \|_2^2 &\leq v^\top \Sigma_F v^\top \\
 &= (1 - \eps) v^\top \Sigma_D v + \eps v^\top \Sigma_W v + \eps (1 - \eps) \langle v, \Delta \rangle^2 \\
 &\leq \sigma^2 + \eps (1 - \eps) \langle v, \Delta \rangle^2 \; .
\end{align*}
Since by assumption $\sigma^2 \leq \frac{\eps}{6} \| \Delta \|_2^2$, we have
\[
\langle v, \Delta \rangle^2 \geq \left(1 - \frac{1}{6 (1 - \eps)} \right) \| \Delta \|_2^2 \stackrel{(a)}{\geq} \frac{2}{3} \| \Delta \|_2^2 \geq \frac{2 \sigma^2}{\eps} \; ,
\]
where (a) follows by our assumption that $\eps < 1/2$.
The desired conclusion follows from taking square roots.
\end{proof}

\subsection{A variation of Lemma~\ref{lem:spectral} with finite sample bounds}

As noted in Section~\ref{sec:theory}, Lemma~\ref{lem:spectral} as written may not directly apply to the situations we are interested in, simply because we ultimately care about applying this lemma to the training data.
Thus to directly apply our guarantees to this setting, we require the same bounds, but over the empirical distribution.
To do so, we use the following concentration bound:
\begin{theorem}[\cite{Ver18}, Theorem 5.6.1]
\label{thm:vershynin}
Fix $n \geq 1$.
Let $X$ be a random vector over $\R^d$, and assume $\| X \|_2 \leq K (\E [\| X \|_2^2])^{1/2}$ almost surely.
Let $M = \E [XX^\top]$.
Let $X_1, \ldots, X_n$ be $m$ i.i.d. copies of $X$, and let $\widehat{M} = \frac{1}{n} \sum_{i = 1}^n X_i X_i^\top$.
Then, there exists a universal constant $C > 1$ so that with probability $99/100$, we have
\[
\| \widehat{M} - M \|_2 \leq C \left( \sqrt{\frac{K^2 d \log d}{n}} + \frac{K^2 d \log d}{n} \right) \| M \|_2 \; .
\]
\end{theorem}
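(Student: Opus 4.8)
The plan is to obtain Theorem~\ref{thm:vershynin} from the matrix Bernstein inequality applied to the centered empirical second-moment matrix; this is the standard route (and the one taken in \cite{Ver18}). Write $\widehat{M} - M = \frac{1}{n}\sum_{i=1}^n Z_i$ with $Z_i = X_i X_i^\top - M$, so the $Z_i$ are i.i.d., symmetric $d \times d$, and $\E[Z_i] = 0$. To apply matrix Bernstein I need an almost-sure operator-norm bound $\|Z_i\|_2 \le L$ and a matrix-variance bound $\bignorm{\sum_{i=1}^n \E[Z_i^2]}_2 \le \nu$.

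For the almost-sure bound: $\|Z_i\|_2 \le \|X_i\|_2^2 + \|M\|_2$, and since $\E\|X\|_2^2 = \tr(M) \le d\,\|M\|_2$, the hypothesis $\|X\|_2^2 \le K^2\,\E\|X\|_2^2$ gives $\|X_i\|_2^2 \le K^2 d\,\|M\|_2$ almost surely; taking $K \ge 1$ (which costs nothing) we may set $L = 2 K^2 d\,\|M\|_2$. For the variance, I would use the identity $(X_i X_i^\top)^2 = \|X_i\|_2^2\, X_i X_i^\top$ together with $\|X_i\|_2^2 \le K^2\tr(M)$ almost surely and $X_i X_i^\top \succeq 0$ to get $\E[(X_i X_i^\top)^2] \preceq K^2\tr(M)\,\E[X_i X_i^\top] = K^2\tr(M)\, M \preceq K^2 d\,\|M\|_2\, M$; since $\E[Z_i^2] = \E[(X_i X_i^\top)^2] - M^2 \preceq \E[(X_i X_i^\top)^2]$, summing gives $\nu = \bignorm{\sum_{i=1}^n \E[Z_i^2]}_2 \le n K^2 d\,\|M\|_2^2$.

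Matrix Bernstein for a sum $S = \sum_{i=1}^n Z_i$ of independent centered symmetric $d \times d$ matrices states, for a universal constant $c$,
\[
\Pr\!\left[ \|S\|_2 \ge t \right] \;\le\; 2d \exp\!\left( - c \min\!\left( \frac{t^2}{\nu}, \frac{t}{L} \right) \right) \; .
\]
Choosing $t$ equal to a large enough absolute constant times $\sqrt{\nu \log d} + L\log d$ makes the right-hand side at most $1/100$. Substituting the values of $\nu$ and $L$ gives $\|S\|_2 \le C'\big( \sqrt{n K^2 d \log d} + K^2 d\log d \big)\|M\|_2$ with probability $99/100$, and dividing through by $n$ yields $\|\widehat{M} - M\|_2 \le C\big( \sqrt{K^2 d\log d/n} + K^2 d\log d/n \big)\|M\|_2$, as claimed.

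The only step needing real care is the matrix-variance estimate: the bound $\E[(X X^\top)^2] \preceq K^2\tr(M)\,M$ is in the Loewner order and is legitimate precisely because $X X^\top \succeq 0$, which lets one replace the scalar factor $\|X\|_2^2$ by its almost-sure upper bound $K^2\E\|X\|_2^2 = K^2\tr(M)$ inside the expectation before pulling the constant out. The rest is bookkeeping — in particular tracking that the $\log d$ factor is exactly the ambient-dimension term $2d$ in the matrix Bernstein tail (hence not removable in general), and noting that the lower-order $\|M\|_2$ summand in $L$ and the normalization $K \ge 1$ affect only the universal constant $C$.
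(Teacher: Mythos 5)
The paper does not actually prove this statement---it is quoted (up to the reparametrization of $K$) directly from Vershynin's book as Theorem 5.6.1---so there is no in-paper argument to compare against; your route via matrix Bernstein applied to $Z_i = X_i X_i^\top - M$, with the almost-sure bound $L = 2K^2 d \|M\|_2$ and the variance estimate $(XX^\top)^2 = \|X\|_2^2\, XX^\top \preceq K^2 \tr(M)\, XX^\top$ taken in expectation, is precisely the standard proof given there, and every step checks out (including the use of $K \ge 1$, which is automatic from the hypothesis). The only cosmetic point is that driving the tail $2d\exp(-c\min(t^2/\nu, t/L))$ below $1/100$ requires $\log(200d)$ rather than $\log d$, which is absorbed into the universal constant only for $d \ge 2$; this is the same implicit convention as in the cited theorem and does not affect how the result is used in the corollary.
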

As a simple corollary of this, we can give a finite-sample version of Lemma~\ref{lem:spectral}:
\begin{corollary}
Fix $1/4 > \eps > 0$, and let $K > 0$.
Let $D, W$ be distributions over $\R^d$ with mean $\mu_D, \mu_W$ and covariances $\Sigma_D, \Sigma_W \preceq \sigma^2 I$, so that 
if $X \sim D$ (resp. $X \sim W$), then $\| X - \mu_D \|_2 \leq K (\E [\| X - \mu_D \|_2^2])^{1/2}$ (resp $\| X - \mu_W \|_2 \leq K (\E [\| X - \mu_W \|_2^2])^{1/2}$) almost surely.
Let $F  = (1 - \eps) D + \eps W$ be the mixture of $D, W$ with mixing weights $(1 - \eps)$ and $\eps$, respectively.
Let $X_1, \ldots, X_n$ be $n$ i.i.d. draws from $F$, where
\[
n = \Omega \left( \frac{d \log n}{\eps} \right) \; .
\]
Let $\cX$ be the subset of $X_1, \ldots, X_n$ drawn from $D$, and let $\cW$ be the subset of $X_1, \ldots, X_n$ drawn from $W$.
Then, if $\| \mu_D - \mu_W \|_2^2 \geq \frac{10 \sigma^2}{\eps}$, then if $\mu_F$ is the mean of $X_1, \ldots, X_n$ and $v$ is the top eigenvector of the empirical covariance, there exists $t \in \R_{> 0}$ so that
\[
\Pr_{X \sim \cX} \left[ \left| \langle X - \mu_F, v \rangle \right| >  t \right] < \eps \\
\Pr_{X \sim \cX} \left[ \left| \langle X - \mu_F, v \rangle \right| < t  \right] < \eps \; ,
\]
with probability at least $9 / 10$.
\end{corollary}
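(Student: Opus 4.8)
The plan is to reduce this to the population statement of Lemma~\ref{lem:spectral}: on an event of probability at least $9/10$ I would show that every empirical quantity entering that proof is within a small additive or multiplicative factor of its population counterpart, and then re-run the chain Lemma~\ref{lem:correlations} followed by Lemma~\ref{lem:correlation-implies-separation} over the empirical measures on $\cX$ and $\cW$. Write $n_D = |\cX|$, $n_W = |\cW|$, let $\widehat{\mu}_D, \widehat{\mu}_W$ denote the empirical means of the two subsamples, set $\widehat{\Delta} = \widehat{\mu}_D - \widehat{\mu}_W$, and let $\widehat{\Sigma}_{\cX}, \widehat{\Sigma}_{\cW}$ be the corresponding empirical covariances. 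The structural fact that makes the reduction go through is that the population identity for $\Sigma_F$ used in Lemma~\ref{lem:correlations} has an \emph{exact} empirical analogue, obtained by decomposing the pooled scatter matrix into within- and between-group parts:
\begin{align*}
\widehat{\Sigma}_F = \frac{n_D}{n}\,\widehat{\Sigma}_{\cX} + \frac{n_W}{n}\,\widehat{\Sigma}_{\cW} + \frac{n_D n_W}{n^2}\,\widehat{\Delta}\widehat{\Delta}^\top \; .
\end{align*}
Thus the combinatorial structure of the argument is preserved, and only the four objects $n_W/n$, $\widehat{\Sigma}_{\cX}$, $\widehat{\Sigma}_{\cW}$, $\widehat{\Delta}$ need to be controlled.

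First I would collect the concentration facts, each on an event that fails with probability at most $1/100$. A Chernoff bound, valid since $\eps n = \Omega(d\log n)$, gives $n_W = (1 \pm o(1))\eps n$, hence $n_D \geq n/2$ and $\frac{n_D n_W}{n^2} = (1 \pm o(1))\eps(1-\eps)$; the hypothesis $\eps < 1/4$ (in place of $1/2$) is exactly what keeps the empirical mixing weight bounded away from $1/2$. Conditioning on the subsample sizes, Theorem~\ref{thm:vershynin} applied to the centered vectors $X - \mu_D$ over $\cX$ and $X - \mu_W$ over $\cW$ gives $\norm{\widehat{\Sigma}_{\cX} - \Sigma_D}_2, \norm{\widehat{\Sigma}_{\cW} - \Sigma_W}_2 \leq \delta\sigma^2$, with $\delta$ an arbitrarily small constant provided the hidden constant in $n = \Omega(d\log n/\eps)$ is taken large enough (as a function of $K$); re-centering at the empirical group means only subtracts a PSD rank-one term, so it does not spoil the upper bounds $\widehat{\Sigma}_{\cX}, \widehat{\Sigma}_{\cW} \preceq (1+\delta)\sigma^2 I$. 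Finally, Markov's inequality applied to $\E\left[\normt{\widehat{\mu}_D - \mu_D}^2\right] = \tr(\Sigma_D)/n_D \leq d\sigma^2/n_D$ (and likewise for $\widehat{\mu}_W$), together with $\normt{\Delta}^2 \geq 10\sigma^2/\eps$, gives $\normt{\widehat{\Delta} - \Delta} \leq c\normt{\Delta}$ for an arbitrarily small constant $c$. A union bound over these finitely many events, together with the fixed $99/100$ from each of the two uses of Theorem~\ref{thm:vershynin}, leaves total failure probability below $1/10$.

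On this good event I would re-run Lemma~\ref{lem:correlations}. Since the first two summands above are PSD, $\widehat{\Sigma}_F \succeq \frac{n_D n_W}{n^2}\widehat{\Delta}\widehat{\Delta}^\top$, so $\norm{\widehat{\Sigma}_F}_2 \geq (1-o(1))\eps(1-\eps)(1-c)^2\normt{\Delta}^2$; on the other hand, testing against the empirical top eigenvector $v$ and using $\widehat{\Sigma}_{\cX}, \widehat{\Sigma}_{\cW} \preceq (1+\delta)\sigma^2 I$ gives $v^\top\widehat{\Sigma}_F v \leq (1+\delta)\sigma^2 + (1+o(1))\eps\langle v, \widehat{\Delta}\rangle^2$. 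Combining these and feeding in $\sigma^2 \leq \frac{\eps}{10}\normt{\Delta}^2$ yields $\langle v, \widehat{\Delta}\rangle^2 \geq \Omega(1)\cdot\normt{\Delta}^2$, and hence $\langle v, \Delta\rangle^2 \geq \alpha^2\sigma^2/\eps$ for a fixed constant $\alpha$, after subtracting the contribution of $\normt{\widehat{\Delta} - \Delta} \leq c\normt{\Delta}$ --- here the gap between the constant $10$ and the weaker constant $6$ of Lemma~\ref{lem:spectral} is precisely what absorbs the finite-sample slack $\delta, c$ while still leaving $\alpha$ large enough for the next step. With this correlation bound, I would re-run Lemma~\ref{lem:correlation-implies-separation} with $v$ in place of $u$ and the empirical second moments in place of the true covariances in~\eqref{eq:cheb1}--\eqref{eq:cheb2}. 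The key point is that the Chebyshev inputs become plain Markov bounds on the empirical measure:
\begin{align*}
\Pr_{X \sim \cX}\left[ \left| \langle X - \mu_D, v \rangle \right| > s \right] \;\leq\; \frac{1}{n_D\,s^2}\sum_{i \in \cX}\langle X_i - \mu_D, v \rangle^2 \;\leq\; \frac{(1+\delta)\sigma^2}{s^2} \; ,
\end{align*}
the second inequality holding because that empirical second moment equals $v^\top$ applied to the (true-mean-centered) empirical covariance of $\cX$, which differs negligibly from $\widehat{\Sigma}_{\cX} \preceq (1+\delta)\sigma^2 I$; the analogue over $\cW$ is identical. Choosing the threshold $t$ as in the proof of Lemma~\ref{lem:correlation-implies-separation} and using $\widehat{\mu}_D - \mu_F = \frac{n_W}{n}\widehat{\Delta}$ and $\widehat{\mu}_W - \mu_F = -\frac{n_D}{n}\widehat{\Delta}$ to pass from centering at the group means to centering at the overall empirical mean $\mu_F$, one then gets both probabilities in the definition of $\eps$-spectral separability strictly below $\eps$, which is the claim.

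The step I expect to be the main obstacle --- really a subtlety rather than a genuine difficulty --- is that the direction $v$ is itself a function of the sample, so the Chebyshev bounds~\eqref{eq:cheb1}--\eqref{eq:cheb2} cannot be invoked for it directly. The fix is the observation made above: once $\norm{\widehat{\Sigma}_{\cX}}_2$ and $\norm{\widehat{\Sigma}_{\cW}}_2$ are controlled in operator norm, Markov's inequality on the empirical measure gives the needed second-moment tail bound \emph{uniformly} over all unit directions, $v$ included. The only other delicate point is quantitative: in the regime $n_W = \Theta(\eps n) = \Theta(d\log n)$, Theorem~\ref{thm:vershynin} only guarantees a constant-factor (not vanishing) relative error on $\widehat{\Sigma}_{\cW}$ unless the hidden constant in the sample-size bound is taken large, so the whole chain of inequalities must be carried out with enough numerical margin --- which is why the hypotheses are stated with $\eps < 1/4$ and $\normt{\mu_D - \mu_W}^2 \geq 10\sigma^2/\eps$ rather than with the weaker constants of Lemma~\ref{lem:spectral}.
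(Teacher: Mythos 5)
Your proposal is correct and follows essentially the same route as the paper's proof: a Chernoff bound on the subsample sizes, the matrix concentration bound of Theorem~\ref{thm:vershynin} to control the empirical covariances of $\cX$ and $\cW$ (noting that re-centering at the empirical group means only helps), and then an application of the machinery of Lemma~\ref{lem:spectral} to the empirical distributions. You are in fact more explicit than the paper about the points it glosses over --- the shift $\widehat{\mu}_D - \widehat{\mu}_W$ versus $\mu_D - \mu_W$, the empirical mixing weight, and the fact that the data-dependent direction $v$ is handled because the Chebyshev steps become Markov bounds on the empirical measure, uniform over unit directions once the empirical covariances are controlled in operator norm.
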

\begin{proof}
By a Chernoff bound, with probability $\geq 99 / 100$, we have that $|\cX| \geq (1 - 2 \eps) n$ and $|\cW| \geq \frac{\eps}{2} n$.
Condition on this event for the rest of the proof.
Let $\widehat{\mu}_D = \frac{1}{|\cX|} \sum_{i \in \cX} X_i$ be the empirical mean of the samples in $\cX$, and let $\widehat{\mu}_W$ be defined similarly for the samples in $\cW$.
Let $\widehat{\Sigma}_D = \frac{1}{|\cX|} \sum_{i \in \cX} (X_i - \widehat{\mu}_D) (X_i - \widehat{\mu}_D)^\top$ be the empirical covariance of the points in $\cX$, and let $\widehat{\Sigma}_W$ be defined similarly for the samples in $\cW$.
Finally, let $\widehat{M}_D = \frac{1}{|\cX|} \sum_{i \in \cX} (X_i - \mu_D) (X_i - \mu_D)^\top$ be the empirical second moment with the actual mean of the distribution, and again define $\widehat{M}_W$ analogously.
By Theorem~\ref{thm:vershynin}, by our choice of $n$, we have
\begin{align*}
\| \widehat{M}_D - \Sigma_D \|_2 &\leq \frac{1}{4} \| \Sigma_D \|_2 \; ,~\mbox{and} \\
\| \widehat{M}_W - \Sigma_W \|_2 &\leq \frac{1}{4} \| \Sigma_W \|_2 \; , \\
\end{align*}
both with probability at least $99 / 100$.
Thus, by a union bound, all these events happen simultaneously with probability at least $9 / 10$.
Condition on the event that all these events occur.
Then this implies that
\[
\| \widehat{M}_D \|_2 \leq \frac{5}{4} \sigma^2 \; ,~\mbox{and}~\| \widehat{M}_W \|_2 \leq \frac{5}{4} \sigma^2 \; .
\]
Since $\widehat{\Sigma}_D \preceq \widehat{M}_D$ and $\widehat{\Sigma}_W \preceq \widehat{M}_W$, we conclude that 
\[
\| \widehat{\Sigma}_D \|_2 \leq \frac{5}{4} \sigma^2 \; ,~\mbox{and}~\| \widehat{\Sigma}_W \|_2 \leq \frac{5}{4} \sigma^2 \; .
\]
The result then follows by applying Lemma~\ref{lem:spectral} to the empirical distributions over $\cX$ and $\cW$, respectively.
\end{proof}

\clearpage
\section{Additional Experiments}
In this section, we provide various additional experiments. 
First, in Table~\ref{tab:core}, we provide a wider variety of attack parameters for our main experimental setup.
Then, in Table~\ref{tab:vgg}, we present our results training a VGG model instead of a Resnet model. 

\begin{table*}[htp]
\caption{
Full table of accuracy and number of poisoned images left for different attack parameters. For each attack to target label pair, we provide a few experimental runs with different backdoor.
}
\label{tab:core}
\begin{center}
{\renewcommand{\arraystretch}{1.4}
\begin{tabular}{c|c|c|cc|c|cc|c}
    Sample & Target & Epsilon & Nat 1 & Pois 1 & \# Pois Left & Nat 2 & Pois 2 & Std Pois\\
\hline                                                                            
\multirow{2}{*}{\includegraphics{figures/adv_plane.png}}
    & \multirow{2}{*}{bird} & 5\% & 92.27\% & 74.20\% & 57 & 92.64\% & 2.00\% & \multirow{2}{*}{1.20\%} \\
    & &10\% & 92.32\% & 89.80\% & 7 & 92.68\% & 1.50\% &\\
\hline                                                                            
\multirow{2}{*}{\includegraphics{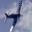}}
    & \multirow{2}{*}{bird} & 5\% & 92.49\% & 98.50\% & 0 & 92.76\% & 2.00\% & \multirow{2}{*}{1.90\%} \\
    & &10\% & 92.55\% & 99.10\% & 0 & 92.89\% & 0.60\% &\\
\hline                                                                            
\multirow{2}{*}{\includegraphics{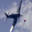}}
    & \multirow{2}{*}{bird} & 5\% & 92.66\% & 89.50\% & 14 & 92.59\% & 1.40\% & \multirow{2}{*}{1.10\%} \\
    & &10\% & 92.63\% & 95.50\% & 2 & 92.77\% & 0.90\% &\\
\hline                                                                            
\multirow{2}{*}{\includegraphics{figures/adv_car.png}}
    & \multirow{2}{*}{cat} & 5\% & 92.45\% & 83.30\% & 24 & 92.24\% & 0.20\% & \multirow{2}{*}{0.10\%} \\
    & &10\% & 92.39\% & 92.00\% & 0 & 92.44\% & 0.00\% &\\
\hline                                                                            
\multirow{2}{*}{\includegraphics{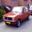}}
    & \multirow{2}{*}{cat} & 5\% & 92.60\% & 95.10\% & 1 & 92.51\% & 0.10\% & \multirow{2}{*}{0.10\%} \\
    & &10\% & 92.83\% & 97.70\% & 1 & 92.42\% & 0.00\% &\\
\hline                                                                            
\multirow{2}{*}{\includegraphics{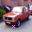}}
    & \multirow{2}{*}{cat} & 5\% & 92.80\% & 96.50\% & 0 & 92.77\% & 0.10\% & \multirow{2}{*}{0.00\%} \\
    & &10\% & 92.74\% & 99.70\% & 0 & 92.71\% & 0.00\% &\\
\hline                                                                            
\multirow{2}{*}{\includegraphics{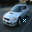}}
    & \multirow{2}{*}{dog} & 5\% & 92.91\% & 98.70\% & 0 & 92.59\% & 0.00\% & \multirow{2}{*}{0.00\%} \\
    & &10\% & 92.51\% & 99.30\% & 0 & 92.66\% & 0.10\% &\\
\hline                                                                            
\multirow{2}{*}{\includegraphics{figures/15pixel.png}}
    & \multirow{2}{*}{dog} & 5\% & 92.17\% & 89.80\% & 7 & 93.01\% & 0.00\% & \multirow{2}{*}{0.00\%} \\
    & &10\% & 92.55\% & 94.30\% & 1 & 92.64\% & 0.00\% &\\
\hline                                                                            
\multirow{2}{*}{\includegraphics{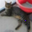}}
    & \multirow{2}{*}{horse} & 5\% & 92.38\% & 96.60\% & 0 & 92.87\% & 0.80\% & \multirow{2}{*}{0.80\%} \\
    & &10\% & 92.72\% & 99.40\% & 0 & 93.02\% & 0.40\% &\\
\hline                                                                            
\multirow{2}{*}{\includegraphics{figures/37pattern.png}}
    & \multirow{2}{*}{horse} & 5\% & 92.60\% & 99.80\% & 0 & 92.57\% & 1.00\% & \multirow{2}{*}{0.80\%} \\
    & &10\% & 92.26\% & 99.80\% & 0 & 92.63\% & 1.20\% &\\
\hline                                                                            
\multirow{2}{*}{\includegraphics{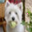}}
    & \multirow{2}{*}{cat} & 5\% & 92.68\% & 97.60\% & 1 & 92.72\% & 8.20\% & \multirow{2}{*}{7.20\%} \\
    & &10\% & 92.59\% & 99.00\% & 4 & 92.80\% & 7.10\% &\\
\hline                                                                            
\multirow{2}{*}{\includegraphics{figures/53ell.png}}
    & \multirow{2}{*}{cat} & 5\% & 92.86\% & 98.60\% & 0 & 92.79\% & 8.30\% & \multirow{2}{*}{8.00\%} \\
    & &10\% & 92.29\% & 99.10\% & 0 & 92.57\% & 8.20\% &\\
\hline                                                                            
\multirow{2}{*}{\includegraphics{figures/74pixel.png}}
    & \multirow{2}{*}{deer} & 5\% & 92.68\% & 99.30\% & 0 & 92.68\% & 1.10\% & \multirow{2}{*}{1.00\%} \\
    & &10\% & 92.68\% & 99.90\% & 0 & 92.74\% & 1.60\% &\\
\hline                                                                            
\multirow{2}{*}{\includegraphics{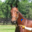}}
    & \multirow{2}{*}{deer} & 5\% & 93.25\% & 97.00\% & 1 & 92.75\% & 2.60\% & \multirow{2}{*}{1.10\%} \\
    & &10\% & 92.31\% & 97.60\% & 1 & 93.03\% & 1.60\% &\\
\hline                                                                            
\multirow{2}{*}{\includegraphics{figures/86ell.png}}
    & \multirow{2}{*}{frog} & 5\% & 92.87\% & 88.80\% & 10 & 92.61\% & 0.10\% & \multirow{2}{*}{0.30\%} \\
    & &10\% & 92.82\% & 93.70\% & 3 & 92.74\% & 0.10\% &\\
\hline                                                                            
\multirow{2}{*}{\includegraphics{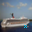}}
    & \multirow{2}{*}{frog} & 5\% & 92.79\% & 99.60\% & 0 & 92.71\% & 0.20\% & \multirow{2}{*}{0.20\%} \\
    & &10\% & 92.49\% & 99.90\% & 0 & 92.58\% & 0.00\% &\\
\hline                                                                            
\multirow{2}{*}{\includegraphics{figures/92pixel.png}}
    & \multirow{2}{*}{bird} & 5\% & 92.52\% & 97.90\% & 0 & 92.69\% & 0.00\% & \multirow{2}{*}{0.00\%} \\
    & &10\% & 92.68\% & 99.30\% & 0 & 92.45\% & 0.50\% &\\
\hline                                                                            
\multirow{2}{*}{\includegraphics{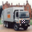}}
    & \multirow{2}{*}{bird} & 5\% & 92.51\% & 87.80\% & 1 & 92.66\% & 0.20\% & \multirow{2}{*}{0.00\%} \\
    & &10\% & 92.74\% & 94.40\% & 0 & 92.91\% & 0.10\% & \\
\end{tabular}}
\end{center}
\end{table*}

\begin{table*}[htp]
\caption{
Full table of accuracy and number of poisoned images left for different attack parameters for a VGG model. For each attack to target label pair, we provide a few experimental runs with different backdoors. Here, we present results for $\varepsilon=10\%$ because unlike our results for Resnet, in many cases $5\%$ poisoned images was not enough to install the backdoor.
}
\label{tab:vgg}
\begin{center}
{\renewcommand{\arraystretch}{0.03}
\begin{tabular}{c|c|c|cc|c|cc|c}
    Sample & Target & Epsilon & Nat 1 & Pois 1 & \# Pois Left & Nat 2 & Pois 2 & Std Pois\\
\hline                                                                            
\includegraphics{figures/adv_plane.png}
    & bird & 10\% & 92.82\% & 83.50\% & 23 & 92.64\% & 2.00\% & 1.30\% \\
\hline                                                                            
\includegraphics{figures/02pattern.png}
    & bird & 10\% & 93.73\% & 98.40\% & 0 & 93.17\% & 1.00\% & 1.40\% \\
\hline                                                                            
\includegraphics{figures/02ell.png}
    & bird & 10\% & 93.30\% & 96.60\% & 1 & 93.63\% & 0.90\% & 0.80\% \\
\hline                                                                            
\includegraphics{figures/adv_car.png}
    & cat & 10\% & 93.39\% & 82.90\% & 12 & 92.24\% & 0.20\% & 0.30\% \\
\hline                                                                            
\includegraphics{figures/13ell.png}
    & cat & 10\% & 93.16\% & 99.10\% & 0 & 93.43\% & 0.10\% & 0.20\% \\
\hline                                                                            
\includegraphics{figures/13pattern.png}
    & cat & 10\% & 92.90\% & 99.40\% & 0 & 93.17\% & 0.00\% & 0.60\% \\
\hline                                                                            
\includegraphics{figures/15pattern.png}
    & dog & 10\% & 93.21\% & 99.90\% & 0 & 93.35\% & 0.00\% & 0.10\% \\
\hline                                                                            
\includegraphics{figures/15pixel.png}
    & dog & 10\% & 93.20\% & 92.20\% & 2 & 93.32\% & 0.10\% & 0.00\% \\
\hline                                                                            
\includegraphics{figures/37pixel.png}
    & horse & 10\% & 93.12\% & 99.60\% & 0 & 93.28\% & 0.40\% & 0.50\% \\
\hline                                                                            
\includegraphics{figures/37pattern.png}
    & horse & 10\% & 92.95\% & 99.90\% & 0 & 93.13\% & 1.00\% & 0.80\% \\
\hline                                                                            
\includegraphics{figures/53pixel.png}
    & cat & 10\% & 93.15\% & 97.20\% & 0 & 93.12\% & 7.20\% & 7.60\% \\
\hline                                                                            
\includegraphics{figures/53ell.png}
    & cat & 10\% & 93.15\% & 99.80\% & 0 & 93.27\% & 6.90\% & 8.60\% \\
\hline                                                                            
\includegraphics{figures/74pixel.png}
    & deer & 10\% & 93.18\% & 99.30\% & 0 & 93.10\% & 1.80\% & 1.40\% \\
\hline                                                                            
\includegraphics{figures/74ell.png}
    & deer & 10\% & 93.26\% & 99.20\% & 0 & 93.04\% & 1.50\% & 1.20\% \\
\hline                                                                            
\includegraphics{figures/86ell.png}
    & frog & 10\% & 93.33\% & 89.80\% & 9 & 93.51\% & 0.30\% & 0.00\% \\
\hline                                                                            
\includegraphics{figures/86pattern.png}
    & frog & 10\% & 92.90\% & 99.80\% & 0 & 93.24\% & 0.10\% & 0.00\% \\
\hline                                                                            
\includegraphics{figures/92pixel.png}
    & bird & 10\% & 93.47\% & 98.40\% & 0 & 93.44\% & 0.00\% & 0.10\% \\
\hline                                                                            
\includegraphics{figures/92pattern.png}
    & bird & 10\% & 93.20\% & 93.40\% & 0 & 92.95\% & 0.00\% & 0.10\% \\
\end{tabular}}
\end{center}
\end{table*}

\end{document}